\newtheorem{theorem}{Theorem}
\newtheorem{proposition}[theorem]{Proposition}
\newtheorem{lem}[theorem]{Lemma}
\def\mdash{\hbox{-}}
\begin{document}

\title{Constraint-based Sequential Pattern Mining with Decision Diagrams}
\author{Amin Hosseininasab \\
Tepper School of Business,\\ Carnegie Mellon University, USA \\
aminh@andrew.cmu.edu
\And
Willem-Jan van Hoeve\\
Tepper School of Business,\\ Carnegie Mellon University, USA\\
vanhoeve@andrew.cmu.edu
\And
Andre A. Cire\\
Dept. of Management,\\
University of Toronto Scarborough, Canada\\
andre.cire@rotman.utoronto.ca}
\maketitle

\begin{abstract}
Constrained sequential pattern mining aims at identifying frequent patterns on a sequential database of items while observing constraints defined over the item attributes. We introduce novel techniques for constraint-based sequential pattern mining that rely on a multi-valued decision diagram representation of the database. Specifically, our representation can accommodate multiple item attributes and various constraint types, including a number of non-monotone constraints. To evaluate the applicability of our approach, we develop an MDD-based prefix-projection algorithm
and compare its performance against a typical generate-and-check variant, as well as a state-of-the-art constraint-based sequential pattern mining algorithm. Results show that our approach is competitive with or superior to these other methods in terms of scalability and efficiency.
\end{abstract}

\section{Introduction}
Sequential Pattern Mining (SPM) is a fundamental data mining task with a large array of applications in marketing, health care, finance, and bioinformatics, to name a few. Frequent patterns are used, e.g., to extract knowledge from data within decision support tools, to develop novel association rules, and to design more effective recommender systems.  We refer the reader to \cite{fournier2017survey} for a recent and thorough review of SPM and its applications.

In practice, mining the entire set of frequent patterns in a database is not of interest, as the resulting number of items is typically large and may provide no significant insight to the user. It is hence desirable to restrict the mining algorithm search to smaller subsets of patterns that satisfy problem-specific constraints. For example, in online retail click-stream analysis, we may seek frequent browsing patterns from sessions where users spend at least a minimum amount of time on certain items that have specific price ranges. Such constraints limit the output of SPM and are much more effective in knowledge discovery, as compared to an arbitrary large set of frequent click-streams.

A na\"ive approach to impose constraints in SPM is to first collect all unconstrained frequent patterns, and then to apply a post-processing step to retain patterns that satisfy the desired constraints. This approach, however, may be expensive in terms of memory requirements and computational time, in particular when the resulting subset of constrained patterns is small in comparison to the full unconstrained set. Constraint-based sequential pattern mining (CSPM) aims at providing more efficient methods by embedding constraint reasoning within existing mining algorithms~\cite{pei2007constraint,negrevergne2015constraint}. Nonetheless, while certain constraint types are relatively easy to incorporate in a mining algorithm, others of practical use are still challenging to handle in a general and effective way. This is particularly the case of non-monotone constraints representing, e.g., sums and averages of attributes.

\textit{Contributions.} In this paper, we propose a novel representation of sequential database using a multi-valued decision diagram (MDD), a graphical model that compactly encodes the sequence of items and their attributes by leveraging symmetry. The MDD representation can be augmented with constraint-specific information, so that constraint satisfaction is either guaranteed or enforced during the mining algorithm. Finally, as a proof of concept, we implement a general prefix-projection algorithm equipped with an MDD to enforce several constraint types, including complex constraints such as average
(``avg'') and median (``md''). To the best of our knowledge, this paper is the first to consider the ``sum,'' ``avg,'' and ``md'' constraints with arbitrary item-attribute association within the pattern mining algorithm. Lastly, we provide an experimental comparison on real-world benchmark databases, and show that our approach is competitive with or superior to a state-of-the-art CSPM algorithm.

\section{Related work}

Research in CSPM has primarily focused on exploiting special properties of constraints, such as monotonicity or anti-monotonicity, to guarantee the feasibility of pattern extensions in the mining algorithm \cite{garofalakis1999spirit,zaki2000sequence,lin2005efficient,bonchi2005pushing,chen2006constraint,pei2007constraint,nijssen2014constraint,mallick2014constraint,aoga2017mining}.  Constraint types that do not possess such properties remain a challenge for CSPM algorithms, although some of these have been successfully incorporated in more general item-set mining on databases where events have no specific order~\cite{soulet2005exploiting,bistarelli2007soft,bonchi2007extending,le2009optimal,leung2012constrained}, as well as in CSPM when items and attributes are interchangeable ~\cite{pei2007constraint}. 

Recently, constraint programming (CP) has emerged as a successful tool for CSPM \cite{negrevergne2015constraint,kemmar2016global,kemmar2017prefix,aoga2017mining,guns2017miningzinc}. CP search techniques, albeit general, can potentially be more efficient when compared to specialized CSPM algorithms. Nonetheless, they still rely on constraint-specific properties to effectively prune undesired patterns.For example, \cite{aoga2017mining} show how to effectively implement a number of prefix anti-monotone constraints into CP, but indicate that post-processing is still required to handle monotone constraints such as the minimum span. 

Graphical representations of a database have been shown to be effective in item-set mining \cite{han2004mining,pyun2014efficient,borah2018fp} and SPM \cite{masseglia2009efficient}. Previous works
have also applied binary decision diagrams as a database modeling tool \cite{loekito2006fast,loekito2007zero,loekito2010binary,cambazard2010knowledge}, which are effective when the sequences of the database are similar, but typically do not scale otherwise. We show that our MDD representation retains its size regardless of the similarity between sequences, and provides a more concise representation in the context of SPM. 

\section{Problem definition}\label{sec:probdef}

We next formally describe the SPM problem and then discuss the handling of constraints.

\subsection{The SPM database and mining algorithm}

The SPM database consists of a set of events, which are modeled by a set of literals $I$ denoted by \textit{items}. Items $i \in I$ are associated with a set of \textit{attributes} $\mathbb{A}=\left\lbrace \mathcal{A}_1,...,\mathcal{A}_{|\mathbb{A}|} \right\rbrace$; for example, attributes can be price, quality, or time. A {\em sequence database} $\mathcal{SD}$ is defined as a collection of $N$ item sequences $\left\lbrace S_1,S_2,\dots,S_N\right\rbrace$, where all sequences are ordered with respect to the same attribute $\mathcal{A} \in \mathbb{A}$; e.g., occurrence in time. Table \ref{tab:data} illustrates an example $\mathcal{SD}$ with $N := 3$, $|I|:=3$, and $M:=\underset{n\in \{1,...,N\}}{\max}\left\lbrace |S_n| \right\rbrace=3$, where items $i \in I$ are associated with time and price attributes.

\begin{table} 
\centering
\caption{Example $\mathcal{SD}$, with attributes of time and price.}
\label{tab:data}
\begin{tabular}{c l}\toprule
$S_{\rm ID}$&Sequence: $\left\lbrace (item, time, price) \right\rbrace $\\ \midrule
1&$\langle (B,1,5), (B,3,3) \rangle $\\
2&$\langle (B,3,3), (A,8,1), (B,9,3) \rangle$\\
3&$\langle (C,2,1), (C,5,2), (A,8,3) \rangle$\\ \bottomrule
\end{tabular}
\end{table}

The SPM task asks for the set of frequent \textit{patterns} within $\mathcal{SD}$. A pattern $P=\langle i_1, i_2, \dots, i_{|P|} \rangle$ is a subsequence of some $S \in \mathcal{SD}$. Let $S[j]$ denote the $j^{th}$ position (i.e., item) of sequence $S$. A subsequence relation $P \preceq S$ holds if and only if there exists an embedding $e: e_1 \le e_2 \le ... \le e_{|P|}$ such that $S[e_j]=i_j, i_j \in P$. For example, $P=\langle A, B \rangle $ is a subsequence of $S= \langle A, B, C, B \rangle$ with two possible embeddings $(1,2)$ or $(1,4)$. 
We define a super-sequence relation $S \succeq P$ analogously, with ``$\le$'' replaced by ``$\ge$''. A pattern is frequent if it is a subsequence of at least $\theta$ number of sequences in $\mathcal{SD}$, where $\theta$ is a given frequency threshold.

The two best-known mining algorithms for SPM are the Apriori algorithm introduced by \cite{agrawal1994fast}, and the prefix-projection algorithm introduced by \cite{han2001prefixspan}. Both are iterative procedures and operate by extending frequent patterns one item at a time. In Apriori, candidate patterns are generated by expanding a  pattern with all available items, and thereafter checking the frequency of generated candidates. As candidates may or may not be frequent, the Apriori algorithm suffers from the exponential explosion of the number of generated candidates and redundancy. The prefix-projection algorithm, in turn, operates by projecting 
each sequence $S \in\mathcal{SD}$ onto a smallest subsequence $\bar{S}=\langle i_1, i_2, \dots, i_j \rangle$, denoted by \textit{prefix}, and searching for frequent items in this reduced database. Any sequence that is obtained by 
extending a frequent prefix is guaranteed to be frequent in the original database. Prefix-projection is more efficient than the Apriori algorithm as it rules out infrequent patterns more effectively, but it requires the full database to be in memory \cite{han2001prefixspan}. 

\subsection{Constraint satisfaction in CSPM}
A constraint $C_{type}(\cdot)$ is a Boolean function imposed on either the patterns or their attributes. A pattern $P$ satisfies a constraint if and only if $C_{type}(P)=true$. The objective of CSPM is to find all frequent patterns that satisfy a set of user-defined constraints. In particular, the challenge of CSPM is to impose constraints during the mining algorithm, rather than post-processing mined patterns for constraint satisfaction. 

The standard framework for CSPM is to classify constraints as being monotone or
anti-monotone, as such constraint are easy to handle within the mining algorithm~\cite{pei2007constraint}.\footnote{A third classification is succinctness, which allows immediate pattern generation using a formula rather than an algorithm.}  
A constraint is {\em monotone} if its violation by a sequence $S$ implies that all subsequences $\bar{S}\preceq S$ also violate the constraint.  

A constraint is {\em anti-monotone} if its violation by a sequence $S$ implies violation by all super-sequences $\hat{S}\succeq S$.  
Table \ref{tab:conskind} lists common constraint types with their characterization.
The concepts of monotonicity, anti-monotonicity, and violation are analogously extended to prefixes.

\begin{table}
\caption{Characterization of constraints as monotone (M), anti-monotone (AM), or non-monotone (NM) for SPM.}
\label{tab:conskind}
\centering
\resizebox{\columnwidth}{!}{
\begin{tabular}{l l c c c}
	\toprule
	\textbf{Name}&\textbf{Constraint $:=$ definition}&\textbf{M}&\textbf{AM}&\textbf{NM}\\ \midrule
	Maximal&$C_{mxl}(P):= \nexists P' \in \mathcal{SD} : P  \prec P'  $&$\bullet$&&\\
	Sup-Patt&$C_{spt}(P):= \exists P' \in \mathcal{SD}: P' \prec P$ &$\bullet$&&\\ \midrule
	\multirow{2}{*}{Length}&$C_{len}(P)\ge c := |P| \ge c$&$\bullet$&&\\
	&$C_{len}(P)\le c$&&$\bullet$&\\\midrule
	Reg Expr&$C_{reg}(P):= P[i] \in \bar{I} \subset I$&&$\star$&\\\midrule
	\multirow{2}{*}{Gap}&$ C_{gap}(\mathcal{A})\le c := \alpha_{j} - \alpha_{j-1} \le c,$&&$\star$&\\
	&\hspace{2cm}$\alpha_{j} \in \mathcal{A}, 2 \le j \le |P|$\\
	&$C_{gap}(\mathcal{A})\ge c$&&$\bullet$&\\\midrule
	\multirow{2}{*}{Span}&$C_{spn}(\mathcal{A})\le c := \max\left\lbrace \mathcal{A}\right\rbrace -\min\left\lbrace \mathcal{A} \right\rbrace \le c $&&$\bullet$&\\
	&$C_{spn}(\mathcal{A})\ge c$&$\bullet$&&\\\midrule
	\multirow{2}{*}{Max/Min}&$C_{max}(\mathcal{A}) \ge c, C_{min}(\mathcal{A})\le c $&$\bullet$&&\\
	&$C_{max}(\mathcal{A}) \le c, C_{min}(\mathcal{A})\ge c $&&$\bullet$&\\\midrule
	Stats&$C_{sum}(\mathcal{A}),C_{avg}(\mathcal{A}), C_{var}(\mathcal{A}), C_{med}(\mathcal{A})$&&&$\bullet$\\\bottomrule
	\multicolumn{4}{l}{$^\star$Not anti-monotone, but prefix anti-monotone.}\\
	\end{tabular}}
\end{table}

Constraints that are neither monotone nor anti-monotone are called {\em non-monotone} and are the most challenging to enforce during mining. While dedicated approaches have been developed for certain non-monotone constraints~\cite{pei2007constraint}, they are otherwise handled by post-processing~\cite{aoga2017mining}.  Our goal is to develop a generic platform to handle non-monotone constraints effectively.  

\section{An MDD representation for $\mathcal{SD}$}\label{sec:MDDdef}

MDDs are widely applied as an efficient data structure in verification problems~\cite{wegener2000branching} and were more recently introduced as a tool for discrete optimization and constraint programming~\cite{BerCirHoeHoo2016}. Here, we use an MDD to fully encode the sequences from $\mathcal{SD}$; we refer to such data structure as an \textit{MDD database}. 
We show how constraint satisfaction is achieved by storing constraint-specific information at the MDD nodes, thereby removing the need to impose constraint-specific rules in a mining algorithm.

\subsection{MDD construction for the SPM problem}

An MDD $M=\left(U,A\right) $ is a layered directed acyclic graph, where $U$ is the set of nodes, and $A$ is the set of arcs. Set $U$ is partitioned into layers $(l_0, l_1, ..., l_{m+1})$, such that layers $l_i: 1\le i \le m$ correspond to position (item) $i$ of a sequence $S \in \mathcal{SD}$. Layers $l_0$, and $l_{m+1}$ consist of single nodes, namely the root node $r \in l_0$, and the terminal node $t \in l_{m+1}$. The root and terminal node are used to model the start and end of all sequences, respectively. Figure \ref{fig:mdd}.a shows the MDD database model for the $\mathcal{SD}$ of Table \ref{tab:data}.

Layers $l_j, 1\le j \le m$, contain one node per item $i \in I: \exists S \in \mathcal{SD} , S[j]=i$, and model the possible items at position $j$ of all sequences $S \in \mathcal{SD}$. For example, layer 1 of the MDD database in Figure \ref{fig:mdd}.a has two nodes corresponding to items $B, C$, and no node associated to item $A$. To distinguish which nodes are associated to which sequences $S\in \mathcal{SD}$, we define labels $d_u$ for nodes $u \in U$, and store the associated sequence index $S_{\rm ID}$ in $d_u$. The first label of node $B$ at layer 1 of Figure \ref{fig:mdd}.a, indicates that sequences 1 and 2 contain item $B$ at their first position.  In addition, we store the attribute labels associated with the item, one per $S_{\rm ID}$ at each node.  For example, in Figure~\ref{fig:mdd}.a we store the time and price attributes.

\begin{figure}
\centering
\includegraphics{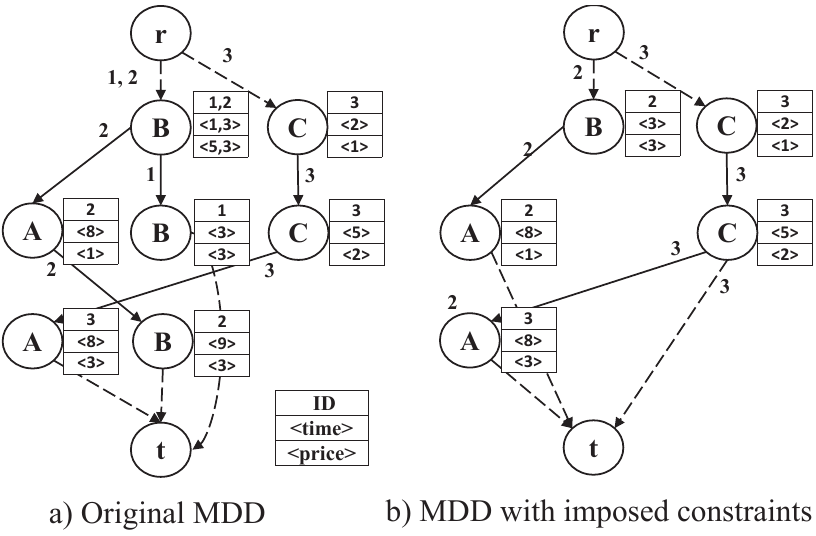}
\caption{MDD database for the example SD in Table~\ref{tab:data}. Arcs skipping layers in Figure a) are not shown for clarity.}
\label{fig:mdd}
\end{figure} 

An arc $a=(u,v) \in A$, is directed from a node $u \in l_j$ to a node $v \in l_{j'}: j'>j$, and represents the next possible item after node $u$, for all sequences in $\mathcal{SD}$. Similar to nodes $u \in U$, labels $d_a$ are defined for arcs $a\in A$ and store their associated sequences. A sequence $S$ is thus represented by a path from $r$ to $t$, following the nodes and arcs associated to $S_{\rm ID}$. 
As we will search the MDD for patterns during the mining algorithm, we explicitly allow arcs to skip layers.  That is, arc $(u,v) \in A$ can refer to any pair of nodes $u$, $v$ on an $r$-$t$ path $P$ representing a sequence $S$.  In Fig.~\ref{fig:mdd} we only depict the arcs that represent the original sequences in $\mathcal{SD}$, for clarity. For example, the arc between node $B$ at layer 1 and node $B$ at layer 3 (following sequence $S_{\rm ID}=2$) is formally defined but omitted from the picture. 
Observe that any prefix or subsequence is represented by a partial path in the MDD, possibly using the arcs that skip layers.
Lastly, we note that the MDD database (without imposed constraints) is built by a single scan of the database. 

\subsection{Imposing constraints on the MDD database}

We use the MDD structure to enforce certain constraints on the MDD database itself. This has 
three main benefits, as follows. First, constraint satisfaction is performed only once, and not once per projected database as in the prefix-projection algorithm. Second, several constraints can be considered simultaneously, as opposed to iterative methods that consider each constraint individually and incur larger computational costs. Lastly, imposing constraints results in a smaller MDD, and consequently reduced computational requirements for the mining algorithm.

A constraint $C_{type}$ can be imposed directly on the MDD if it is prefix monotone or prefix anti-monotone. That is, the feasibility of extending a pattern $P$ ending at item $i$ by an item $i'$, is only dependent on the relationship between consecutive items $i, i'$. 
Examples of such constraints are the gap and regular expression constraints. An infeasible extension of such constraints is prevented by not creating an arc between their respective nodes. For example, if item $i$ cannot be followed by item $i'$, then no arc of the MDD database is constructed between their corresponding nodes. 

Constraints on the MDD database are incorporated during its construction. In particular, the MDD database is built in increments using a backwards induction on the position $j$ of a sequences $S \in \mathcal{SD}$. A backwards induction is chosen, as it allows us to gather constraint-specific information, used for constraint satisfaction later in the mining algorithm. For sequence $S$, the algorithm starts from the node corresponding to the item at position $S[j]: j=|S|$, and checks whether this item may be used to extend a pattern ending in any of the sequence's previous items $i \in l_{j'}<l_j$. Whenever an extension is feasible, an arc $(u,v)$ is created between the items' respective nodes in the MDD. The algorithm then increments and repeats the same procedure for the item in position $j-1$. 

By the construction above, a node connects to all nodes representing a feasible extension with respect to the imposed constraints. Thus, the mining algorithm needs only to search the children of a node $u \in U$ to extend any pattern ending at $u$. Figure \ref{fig:mdd}.b shows an example of imposing constraint $C_{gap}(time)\ge 3$ on the MDD database of Figure \ref{fig:mdd}.a.

Imposing constraints on the MDD database can be made more efficient by exploiting their properties such as anti-monotonicity. For example, given an anti-monotone constraint, if the extension of item $i$ at $S[j]$ to an item at position $S[j']$ is infeasible, it is guaranteed that any extension of $i$ to items $S[k]: k\ge j'$ is also infeasible. If a constraint is non-monotone, we are required to check its satisfaction for all possible extensions, which is done only if all monotone and anti-monotone constraints are satisfied.

Not all constraints can be imposed on the MDD database. The satisfaction of such constraints is performed during the mining algorithm, discussed in the next section. 

\section{Pattern mining with MDD databases}\label{sec:effinfo}

In this section, we discuss how to perform constraint reasoning by incorporating 
specific information into the MDD nodes. Such information is used to establish conditions to efficiently
remove infeasible patterns from the database.

\subsection{Information exploitation for effective mining}

By construction, an $r\mdash u$ path in the MDD database represents the prefix of a pattern ending at node $u$. Similarly, any extension of this prefix is modeled by a $u \mdash t$ path. Post-processing patterns for constraint satisfaction corresponds to checking the feasibility of all $u\mdash t$ paths. 
We can, however, exploit the MDD structure to determine whether it is possible 
to extend an infeasible pattern to a feasible one. This is achieved by augmenting the MDD nodes with 
constraint-specific information that allow us to perform such reasoning.

For instance, consider a constraint $C_{min}(price)\ge 5$ and the extension of an infeasible pattern ending at node $u$, as shown in Figure \ref{fig:exmcons}. Observe that only one $u\mdash t$ path results in a feasible pattern. Instead of explicitly searching all $u\mdash t$ paths, we can store the minimum price reachable from nodes $u \in U$, during the MDD construction, and then use it to guarantee that a feasible extension exists. 

\begin{figure}
\centering
\includegraphics{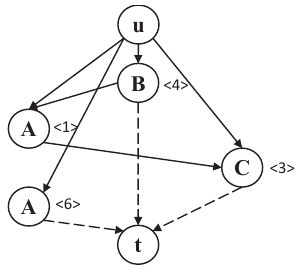}
\caption{Extending a pattern ending at node $u$, with constraint $C_{min}(\mathcal{A})\ge 5$. The label at each node represents the attribute of the item.}
\label{fig:exmcons}
\end{figure} 

\subsection{Categories of constraint-specific information}

We now describe constraint-specific information for a number of practical constraint classes.
We only present the proof for lower bound constraints; upper bound conditions can be established analogously. We define $\alpha^u \in \mathcal{A}$ to be the attribute value of item $i$ at node $u$ of the MDD.

\subsubsection{Span constraint:} 

Let $\beta^{u}_1$ and $\beta^{u}_2$ denote the minimum and maximum values of $\alpha$ reachable from $u$, respectively. Values $\beta^u_1$ are initially set to $\alpha^u$. When adding an arc $(u,v)$, we update $\beta^u_1 \leftarrow \beta^v_1$ if $\beta^u_1 > \beta^v_1$, and $\beta^u_2 \leftarrow \beta^v_2$ if $\beta^u_2 < \beta^v_2$ for node $v$. By this procedure, $\beta^u_1, \beta^u_2$ give the minimum and maximum values of $\alpha$ reachable from $u$. Proposition \ref{thm:spnnon} proves that by using these variables, we can guarantee the satisfaction of the span constraint.

\begin{proposition}\label{thm:spnnon}
An infeasible pattern $P$ can be extended to a feasible pattern with respect to $C_{spn}(\alpha) \ge c$ if and only if $\max\left\lbrace \underset{\alpha \in P}{\max}\left\lbrace \alpha\right\rbrace , \beta^u_2\right\rbrace  - \min\left\lbrace \underset{\alpha \in P}{\min}\left\lbrace \alpha \right\rbrace , \beta^{u}_1\right\rbrace  \ge c$.
\end{proposition}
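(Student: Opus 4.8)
The plan is to prove the stated biconditional by treating the two implications separately: necessity will follow from a short bounding argument that uses only the bracketing property of $\beta^u_1,\beta^u_2$, while sufficiency will require me to exhibit a concrete witnessing extension. Throughout I write $\text{Max}(Q)=\underset{\alpha\in Q}{\max}\{\alpha\}$ and $\text{Min}(Q)=\underset{\alpha\in Q}{\min}\{\alpha\}$ for any pattern $Q$, so that its span is $\text{Max}(Q)-\text{Min}(Q)$. I would first record two facts established earlier in the paper: every extension $P'$ of $P$ is obtained by appending the attribute values along some $u\mdash t$ path, so each value of $P'$ is either a value of the prefix $P$ or a value reachable from $u$; and $C_{spn}(\alpha)\ge c$ is monotone, so extending a pattern never decreases its span. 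By the update rule, $\beta^u_1$ and $\beta^u_2$ are exactly the smallest and largest attribute values over all nodes reachable from $u$, including $u$ itself.

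For necessity, suppose $P$ extends to a feasible pattern $P'$, i.e.\ $\text{Max}(P')-\text{Min}(P')\ge c$. Every value of $P'$ lies in $P$ or is reachable from $u$, hence is bounded above by $\max\{\text{Max}(P),\beta^u_2\}$ and below by $\min\{\text{Min}(P),\beta^u_1\}$, giving
\[
c \le \text{Max}(P')-\text{Min}(P') \le \max\{\text{Max}(P),\beta^u_2\}-\min\{\text{Min}(P),\beta^u_1\},
\]
which is precisely the stated inequality. This direction is robust: it depends only on $\beta^u_1,\beta^u_2$ bracketing the reachable values, not on any finer path structure.

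For sufficiency I would assume the inequality and construct an extension attaining span at least $c$, splitting on where the outer maximum and minimum are realized. If both are realized inside the prefix, then $\text{Max}(P)-\text{Min}(P)\ge c$, contradicting that $P$ is infeasible, so this subcase is vacuous. If exactly one extreme is a reachable value -- say the maximum equals $\beta^u_2$ at a reachable node $w_2$ while the minimum is $\text{Min}(P)$ -- then extending $P$ along any $u\mdash t$ path through $w_2$ produces a pattern with maximum $\ge\beta^u_2$ and minimum $\le\text{Min}(P)$, so its span is at least $\beta^u_2-\text{Min}(P)\ge c$; the symmetric subcase is handled identically.

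The hard part will be the remaining case, where the maximum is realized at a reachable node $w_2$ with $\beta^u_2>\text{Max}(P)$ and the minimum at a reachable node $w_1$ with $\beta^u_1<\text{Min}(P)$. Here extending toward a single extreme need not reach span $c$, so I must produce one $u\mdash t$ path that visits \emph{both} $w_1$ and $w_2$; such a path yields an extension with maximum $\ge\beta^u_2$ and minimum $\le\beta^u_1$, hence span $\ge\beta^u_2-\beta^u_1\ge c$. Establishing this joint reachability is the crux and the step I expect to be most delicate, since in a layered diagram the two extreme values may a priori be attained on distinct paths. I would order $w_1,w_2$ by layer and argue, via the layer-skipping arcs and transitivity of reachability, that the earlier node reaches the later one so that $u\leadsto w_1\leadsto w_2\leadsto t$ (or the swap) is valid, handling separately any degenerate configuration in which $w_1$ and $w_2$ occupy the same layer. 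Should joint reachability fail to hold in full generality, I would at least retain the necessity direction, which already certifies the inequality as a sound test for pruning non-extendable patterns.
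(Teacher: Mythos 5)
Your necessity argument is correct, and it is essentially the only direction the paper actually establishes: the paper's ``converse'' step assumes $\alpha^{\max}-\alpha^{\min}<c$ and observes that every attribute value on any $u\mdash t$ extension lies between $\min\{\min_{\alpha\in P}\alpha,\,\beta^u_1\}$ and $\max\{\max_{\alpha\in P}\alpha,\,\beta^u_2\}$, so no extension can reach span $c$ --- the same bounding you give, stated in contrapositive form. The paper dismisses the remaining direction as ``straightforward'' and offers no construction.

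The gap you flag in the sufficiency direction is genuine, and your proposed repair (ordering $w_1,w_2$ by layer and composing reachability) will not close it. The obstacle is not only that two nodes in a layered DAG need not be connected: a valid extension of $P$ must be a subsequence of a single $S\in\mathcal{SD}$, so a usable $u\mdash t$ path must follow arcs carrying a common sequence label, whereas $\beta^u_1$ and $\beta^u_2$ are aggregated over \emph{all} nodes reachable from $u$ across \emph{all} sequences passing through $u$. Concretely, take $\alpha^u=5$ and $c=10$, with two sequences continuing from $u$, one to a node of value $0$ and one to a node of value $11$; then $\beta^u_1=0$ and $\beta^u_2=11$, the stated inequality holds since $11-0\ge 10$, yet the only two extensions have spans $5$ and $6$ and no single path visits both extreme nodes. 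So the ``if'' direction of the biconditional fails as stated, and no reachability argument can rescue it without strengthening the stored information to something path-consistent (e.g., per-path pairs of reachable minima and maxima). Your fallback is the right call: the necessity direction alone is what makes the inequality a sound pruning test --- if it fails, the search can safely be cut --- and that is the only way Proposition \ref{thm:spnnon} is used by the mining algorithm.
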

\begin{proof}
The necessity is straightforward. For the converse, assume $\alpha^{\max} - \alpha^{\min} < c$. Then no $u\mdash t$ path contains values of $\alpha$ such that $P$ can become feasible. 
\end{proof}

\subsubsection{Sum constraint:} 
Let $\beta^u$ denote the maximum sum of values $\alpha$ reachable from $u$. We first initialize $\beta^u \leftarrow \alpha^u$. Next, when adding an arc $(u,v)$, we update $\beta^u \leftarrow \beta^v + \alpha^u$ if $\beta^u < \beta^v + \alpha^u$, which results in the maximum sum possible to be stored for node $u$. Proposition \ref{thm:sum} proves that this information is sufficient.  

\begin{proposition}\label{thm:sum}
There exists a feasible extension from node $u$ with respect to each individual constraint if and only if $\sum\limits_{\alpha \in P}\alpha + \beta^{u} \ge c$.
\end{proposition}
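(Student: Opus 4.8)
The plan is to follow the same two-directional argument as in Proposition~\ref{thm:spnnon}, but now exploiting that $\beta^u$ records a \emph{maximum} rather than a range. First I would make explicit the key invariant guaranteed by the backward-induction construction: $\beta^u$ equals the largest sum of $\alpha$-values taken over all $u\mdash t$ paths. This follows by a short induction along the construction order, since the update $\beta^u \leftarrow \beta^v + \alpha^u$ (applied whenever it strictly increases $\beta^u$) combines the value $\alpha^u$ at $u$ with the best already-computed continuation $\beta^v$ among the children $v$ of $u$; because there are finitely many paths, this maximum is attained by some concrete $u\mdash t$ path $Q^\ast$.

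With this invariant in hand, extending the prefix $P$ ending at $u$ along any $u\mdash t$ path $Q$ yields a pattern whose total attribute sum is $\sum_{\alpha \in P}\alpha$ plus the path-sum of $Q$. For necessity (the ``only if''), I would suppose a feasible extension exists: it corresponds to some $u\mdash t$ path $Q$ with $\sum_{\alpha \in P}\alpha + \mathrm{sum}(Q) \ge c$, and since $\beta^u \ge \mathrm{sum}(Q)$ by the invariant, I immediately obtain $\sum_{\alpha \in P}\alpha + \beta^u \ge c$. For sufficiency (the ``if''), I would assume $\sum_{\alpha \in P}\alpha + \beta^u \ge c$ and take the maximizing path $Q^\ast$; extending $P$ along $Q^\ast$ produces a pattern of sum exactly $\sum_{\alpha \in P}\alpha + \beta^u \ge c$, which is therefore feasible.

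The step I expect to require the most care is the invariant of the first paragraph, i.e.\ confirming that the incremental update rule maintains the global maximum over all $u\mdash t$ paths and not merely a local quantity; this is precisely where the backward (bottom-up) direction of the construction is essential, so that every $\beta^v$ is finalized before $\beta^u$ is updated from it. The remaining bookkeeping is routine, with one caveat worth flagging: $\alpha^u$ must be counted exactly once, so $\sum_{\alpha \in P}\alpha$ should range over the prefix strictly preceding the continuation while $\beta^u$ absorbs $\alpha^u$ via its initialization, thereby avoiding the harmless double count that is tolerated in the max/min setting of Proposition~\ref{thm:spnnon} but would distort a sum.
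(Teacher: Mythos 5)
Your proof takes essentially the same route as the paper's: both rest on the invariant that $\beta^u$ records the maximum attribute sum over all $u\mdash t$ paths, from which the ``only if'' direction follows by comparison with any witnessing path and the ``if'' direction by taking the maximizing path; the paper merely compresses this into ``by the construction of $\beta^u$'' where you spell out the induction. Your closing caveat about counting $\alpha^u$ exactly once is a fair bookkeeping observation that the paper's terse write-up glosses over, but it does not alter the argument.
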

\begin{proof}
The necessity is straightforward. For the converse, assume $\sum\limits_{\alpha \in P}\alpha + \beta^{u} < c$. By the construction of $\beta^u$, we can conclude  $\sum\limits_{\alpha \in P}\alpha + \sum\limits_{\alpha \in (u,t)}\alpha < c$, for all $u\mdash$ paths.
\end{proof}

\subsubsection{Average constraint:} 

Let $\beta^u_1$ denote a sum of values $\alpha$ on a $u \mdash t$ path, and $\beta^u_2$ denote the number of attributes $\alpha$ contributing to the sum in $\beta^u_1$. For constraint $C_{avg}(\alpha) \ge c$, and any pattern $P$ ending at node $u$, our objective is to generate values of $\beta^u_1, \beta^u_2$ that give the maximum possible average $\frac{\sum\limits_{\alpha \in P}\alpha + \beta^u_1}{|P| + \beta^u_2}$ above the threshold $c$. 

The generation of $\beta^u_1$ depends on the value $c$ of constraint $C_{avg}(\alpha) \ge c$. Initially we set $\beta^u_1$ = $\alpha^u$, and $\beta^u_2 = 1$. When adding an arc $(u,v)$ during the construction of the MDD, we update $\beta^u_1 \leftarrow \alpha^u + \beta^v_1, \beta^u_2 \leftarrow \beta^v_2 + 1$ if $\left( \alpha^u + \beta^v_1\right) - c\left( 1 + \beta^v_2\right) > \beta_1^u - c\beta^u_2$. This ensures that the best values to maximize $\frac{\sum\limits_{\alpha \in P}\alpha + \beta^u_1}{|P| + \beta^u_2}$ are generated, proven in Lemma \ref{lem:avr}.

\begin{lem}\label{lem:avr}
For constraint $C_{avg}(\alpha) \ge c$, the update procedure above generates values $\beta^u_1, \beta^u_2$ that give the maximum average $\frac{\sum\limits_{\alpha \in P}\alpha + \beta^u_1}{|P| + \beta^u_2}$ above threshold $c$, for a pattern $p$ ending at node $u$.
\end{lem}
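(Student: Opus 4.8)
The plan is to reduce the nonlinear (ratio) average constraint to a linear condition that separates the fixed contribution of the prefix $P$ from the contribution of an extension, and then to show that the stored quantity $\beta^u_1-c\,\beta^u_2$ equals the maximum of this extension contribution over all $u\mdash t$ paths. First I would record the key algebraic equivalence. Fix a $u\mdash t$ path and let $\sigma$ and $k$ denote the sum and the number of attribute values $\alpha$ it contributes (counting those at $u$). Writing $s=\sum_{\alpha\in P}\alpha$ and $\ell=|P|$, and using $\ell+k>0$, the full-pattern average obeys
\[
\frac{s+\sigma}{\ell+k}\ge c \iff (s-c\ell)+(\sigma-ck)\ge 0.
\]
The term $s-c\ell$ is fixed by the prefix, so a feasible extension exists if and only if $(s-c\ell)+\max(\sigma-ck)\ge 0$, the maximum being over all $u\mdash t$ paths. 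This identifies $g(u):=\max\{\sigma-ck\}$ as the correct quantity to precompute. Note that maximizing $\sigma-ck$ is \emph{not} the same as maximizing the extension's own average, but it is precisely what decides feasibility of the average constraint.

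Next I would show that the update rule computes $g(u)$, i.e. that the final stored values satisfy $\beta^u_1-c\,\beta^u_2=g(u)$, by backward induction on the layers. In the base case the only successor of $u$ is $t$, so $\beta^u_1=\alpha^u$, $\beta^u_2=1$, and $\beta^u_1-c\,\beta^u_2=\alpha^u-c=g(u)$. For the inductive step, extending through a child $v$ replaces $(\beta^u_1,\beta^u_2)$ by $(\alpha^u+\beta^v_1,\,1+\beta^v_2)$, which turns the objective into $(\alpha^u-c)+(\beta^v_1-c\,\beta^v_2)$. The update test $(\alpha^u+\beta^v_1)-c(1+\beta^v_2)>\beta^u_1-c\,\beta^u_2$ is exactly the comparison of these objective values, so the procedure selects
\[
\beta^u_1-c\,\beta^u_2=(\alpha^u-c)+\max\Bigl(0,\ \max_{v:(u,v)\in A}(\beta^v_1-c\,\beta^v_2)\Bigr).
\]
By the induction hypothesis $\beta^v_1-c\,\beta^v_2=g(v)$, and the right-hand side is the standard longest-path recursion for $g$, whence $\beta^u_1-c\,\beta^u_2=g(u)$.

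Finally I would combine the two parts: by the equivalence above together with $\beta^u_1-c\,\beta^u_2=g(u)$, the tested condition $\frac{s+\beta^u_1}{\ell+\beta^u_2}\ge c$ holds if and only if $(s-c\ell)+g(u)\ge 0$, i.e. if and only if some $u\mdash t$ extension makes the full-pattern average at least $c$. Thus the generated values are exactly those maximizing $\beta^u_1-c\,\beta^u_2$, and with them the single test $\frac{s+\beta^u_1}{\ell+\beta^u_2}\ge c$ correctly decides whether a feasible extension exists, which is the property the mining algorithm requires. The main obstacle is the first step: because the average is a ratio rather than a sum, one cannot store a single ``best extension average'' at $u$ that remains valid for every prefix; the linearization $(s-c\ell)+(\sigma-ck)\ge 0$ is what makes the extension's contribution additive along the path, and hence amenable to the backward recursion, and it is what justifies maximizing $\sigma-ck$ rather than the average itself.
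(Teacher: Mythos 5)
Your proof is correct, and its overall strategy---a backward induction over the MDD layers that replaces the ratio test by the linear quantity $\beta^u_1 - c\,\beta^u_2$---is the same one the paper uses. The difference is one of precision rather than of route, but it is worth spelling out because your version repairs a real imprecision in the paper's argument. The paper's induction step asserts the \emph{equality} $\max_{u\mdash t}\bigl\{ \frac{\sum_{\alpha \in P}\alpha + \beta^v_1 + \alpha^u}{|P| + \beta^v_2 + 1} - c\bigr\} = \max_{u\mdash t}\bigl\{ \beta^v_1 + \alpha - c(\beta^v_2 + 1)\bigr\}$, which is not literally true: the two expressions differ by the path-dependent positive factor $|P| + \beta^v_2 + 1$, so neither the optimal values nor the maximizing paths need coincide. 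What is true---and what you actually prove---is that the two maxima have the same sign, which is all that the feasibility test in Proposition~\ref{thm:avg} requires. Your explicit remark that the ratio-maximizing path depends on the prefix data $(s,\ell)$ and therefore \emph{cannot} be precomputed at $u$, whereas $\max(\sigma - ck)$ can, is exactly the point the paper's one-line ``equality'' obscures; it also clarifies that the lemma's literal claim that the stored values ``give the maximum average'' should be read as ``give values whose induced average clears the threshold $c$ if and only if some $u\mdash t$ extension does.'' Your formulation with the recursion $g(u) = (\alpha^u - c) + \max\bigl(0, \max_{v}(\beta^v_1 - c\,\beta^v_2)\bigr)$ also makes explicit that the option of not extending past $u$ is retained via the initialization, which the paper leaves implicit. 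In short, yours is the argument the paper intends, stated rigorously.
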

\begin{proof}
Proof by induction. By the initial definitions of $\beta^u_1, \beta^u_2$, the statement is true for any $u$ in the last layer $l_{m}$ of the MDD. Now assume the statement holds for all nodes in layer greater than $l_{j}$. For nodes $u$ in layer $l_{j}$ we choose the path giving the maximum average $\underset{u\mdash t}{\max}\left\lbrace \frac{\sum\limits_{\alpha \in P}\alpha + \beta^v_1 + \alpha^u}{|P| + \beta^v_2 + 1} - c\right\rbrace = \underset{u\mdash t}{\max}\left\lbrace \beta^v_1 + \alpha - c\left(\beta^v_2 + 1 \right)\right\rbrace$.    
\end{proof}

Proposition \ref{thm:avg} shows that $\beta_i^1, \beta_u^2$ are the only required information to check satisfaction of the maximum average constraint. The proof for the minimum average constraint is similar, and omitted for brevity.  

\begin{proposition}\label{thm:avg}
It suffices to record $\beta_1^u, \beta_2^u$ as defined above, to check satisfaction for the minimum average constraint $C_{avg}(\alpha) \ge c$.
\end{proposition}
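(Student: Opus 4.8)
The plan is to reduce the feasibility test for extending $P$ to a single inequality in $\beta_1^u, \beta_2^u$, using Lemma~\ref{lem:avr} as the main tool. First I would note that any extension of the pattern $P$ ending at $u$ corresponds to a $u\mdash t$ path carrying some attribute sum $\beta_1$ and contributing count $\beta_2$, so that the extended pattern has average $\frac{\sum_{\alpha \in P}\alpha + \beta_1}{|P| + \beta_2}$. Since $|P| + \beta_2 > 0$ always, the feasibility condition $\frac{\sum_{\alpha \in P}\alpha + \beta_1}{|P| + \beta_2} \ge c$ is equivalent to the linear inequality $\left(\sum_{\alpha \in P}\alpha - c|P|\right) + \left(\beta_1 - c\beta_2\right) \ge 0$. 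This clearing of the denominator is the key step that converts a ratio optimization into a linear one.

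Next I would observe that the term $\sum_{\alpha \in P}\alpha - c|P|$ is determined entirely by the fixed pattern $P$ and is independent of which extension is selected. Consequently, maximizing the extended average over all $u\mdash t$ paths is equivalent to maximizing the linear quantity $\beta_1 - c\beta_2$ over those paths, which is exactly the comparison driving the update rule. By Lemma~\ref{lem:avr}, the stored pair $\beta_1^u, \beta_2^u$ attains this maximum average, so the best achievable extended average is precisely $\frac{\sum_{\alpha \in P}\alpha + \beta_1^u}{|P| + \beta_2^u}$.

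The conclusion then follows in two directions. If $\frac{\sum_{\alpha \in P}\alpha + \beta_1^u}{|P| + \beta_2^u} \ge c$, then the $u\mdash t$ path realizing $\beta_1^u, \beta_2^u$ is an explicit feasible extension, so $P$ can be completed to a feasible pattern. Conversely, if this inequality fails, then since $\beta_1^u, \beta_2^u$ give the maximum achievable average, every $u\mdash t$ path yields an average bounded above by this maximum and hence strictly below $c$, so no extension can restore feasibility. Therefore the two recorded values are both necessary and sufficient to decide satisfiability of $C_{avg}(\alpha) \ge c$ at $u$.

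The hard part is justifying the equivalence between maximizing the fraction and maximizing the linear form $\beta_1 - c\beta_2$: this hinges on the denominator being strictly positive and, more subtly, on the threshold $c$ being fixed throughout the backward induction, which is why the stored information is constraint-specific rather than universal. Once this linearization is secured, the rest of the argument is a direct appeal to the optimality guaranteed by Lemma~\ref{lem:avr}.
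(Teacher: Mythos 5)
Your argument is correct and takes essentially the same route as the paper: both reduce the feasibility test to the single stored pair via Lemma~\ref{lem:avr}, and your linearization $\bigl(\sum_{\alpha \in P}\alpha - c|P|\bigr) + \bigl(\beta_1 - c\beta_2\bigr) \ge 0$ makes explicit the reasoning the paper's two-line proof leaves implicit. One small caution: the update rule maximizes the linear form $\beta_1 - c\beta_2$, not the ratio itself, so the converse direction is more safely phrased as ``the stored pair maximizes the linear form, hence every path has linear form below the threshold and every extended average is below $c$,'' rather than claiming the stored pair attains the maximum average --- but this does not affect your conclusion.
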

\begin{proof}
The maximum average reachable from node $u$ is $\frac{\beta^u_1}{\beta^u_2}$ by definition. Therefore, if $\frac{\sum\limits_{\alpha \in P}\alpha + \beta^u_1}{|P| + \beta^u_2} < c$, then no $(u,t)$ paths exists that satisfies $C_{avg}(\alpha) \ge c$ for a pattern ending at node $u$.
\end{proof}

\subsubsection{Median constraint:} Let the maximum difference of the number of values $\alpha \ge c$ and the number of values $\alpha < c$, between all possible paths $u\mdash t$, i.e. $\beta^u_1 = \underset{u\mdash t}{\max}\left\lbrace \bigm|\left\lbrace \alpha \in u\mdash t : \alpha \ge c \right\rbrace \bigm| - \bigm|\left\lbrace \alpha \in u\mdash t: \alpha < c \right\rbrace \bigm| \right\rbrace$. Further, let $\beta^u_2$ denote the maximum of values $\alpha < c$ contributing to the count in $\beta^u_1$, and $\beta^u_3$ denote the minimum of values $\alpha \ge c$ contributing to the count in $\beta^u_1$. Observe that the satisfaction of $C_{med}(\alpha) \ge c$ can be determined using values $\beta^u_1-\beta^u_3$. Namely, if $\beta^u_1 > 0$ then there exists more values $\alpha$ above $c$ than below it, guaranteeing satisfaction. Similarly if $\beta^u_1 < 0$ the median constraint is violated. If $\beta^u_1 = 0$ then we calculate the average $\frac{\beta^u_2 + \beta^u_3}{2}$ which gives the median.

The generation of $\beta^u_1$ to $\beta^u_3$ depends on the constant $c$. Initially, we set $\beta_1^u = 0, \beta_2^u = \underset{\alpha \in S}{\min}\left\lbrace \alpha\right\rbrace - 1 , \beta_3^u = \alpha^u$ for all nodes $u: \alpha^u \ge c$, and $\beta^u_1 = 0, \beta_2^u = \alpha^u, \beta^u_3 = \underset{\alpha \in S}{\max}\left\lbrace \alpha\right\rbrace + 1$ for all remaining nodes. Next, during the construction of the MDD, for a node $u$, we find the path $u\mdash t$ that has the highest potential to extend an infeasible pattern $P$ ending at $u$ to a feasible one. The best path $u\mdash v\mdash t$, denote $v\mdash t$, is a path that contains a feasible extension for $P$ given any other feasible extensions available by the remaining $u\mdash v'\mdash t$ paths, denote $v'\mdash t$. We prove four dominance rules that when satisfied, guarantee this for $v\mdash t$.

The first rule is if $\beta_1^v > \beta_1^{v'}$, proven valid in Lemma \ref{lem:med1}. 
\begin{lem}\label{lem:med1}
If $\beta_1^v > \beta_1^{v'}$ holds, and extension of a pattern $P$ by path $v'\mdash t$ is feasible, so is the extension of $P$ by path $v \mdash t$. 
\end{lem}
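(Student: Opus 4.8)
The plan is to reduce the feasibility of extending $P$ to a single monotone quantity, namely the signed count of attribute values relative to $c$, and then to observe that the hypothesis $\beta_1^v > \beta_1^{v'}$ forces this quantity to be strictly larger along $v\mdash t$. First I would fix the pattern $P$ and set $\Delta := |\{\alpha \in P : \alpha \ge c\}| - |\{\alpha \in P : \alpha < c\}|$, the contribution of $P$ itself. For an arbitrary $u\mdash t$ path $\pi$ with signed count $\delta_\pi := |\{\alpha \in \pi : \alpha \ge c\}| - |\{\alpha \in \pi : \alpha < c\}|$, let $D_\pi := \Delta + \delta_\pi$ be the signed count of the combined multiset formed by $P$ and $\pi$.

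The key step is to characterize feasibility of the extension purely through $D_\pi$. Sorting the combined multiset of size $n$ and recalling that $n$ and $D_\pi$ have the same parity, I would argue that $D_\pi > 0$ forces every central (median) position to fall among the values $\ge c$, so the median is $\ge c$ and the extension is feasible; that $D_\pi < 0$ forces those positions among the values $< c$, so the extension is infeasible; and that $D_\pi = 0$ is exactly the boundary case in which the two central values straddle $c$, making the median the average of the largest value below $c$ and the smallest value at least $c$ --- precisely the quantities recorded in $\beta_2^u, \beta_3^u$. In particular, feasibility of any extension implies $D_\pi \ge 0$.

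With this characterization in hand, the lemma follows quickly. By definition the path $v\mdash t$ realizes $\delta = \beta_1^v$ and the path $v'\mdash t$ realizes $\delta = \beta_1^{v'}$, so $D_v = \Delta + \beta_1^v$ and $D_{v'} = \Delta + \beta_1^{v'}$. Assuming the extension of $P$ by $v'\mdash t$ is feasible gives $D_{v'} \ge 0$ by the characterization above. Since $\beta_1^v > \beta_1^{v'}$ and both are integers, $D_v > D_{v'} \ge 0$, hence $D_v > 0$, and the first part of the characterization yields feasibility of the extension of $P$ by $v\mdash t$. The strict inequality $\beta_1^v > \beta_1^{v'}$ is what lets me bypass the delicate $D_\pi = 0$ boundary case for $v$: once the dominated path is already feasible, the dominating path can never land exactly on the boundary.

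The main obstacle I anticipate is the careful bookkeeping in the second step --- establishing the three-way median characterization rigorously for both parities of $n$ and confirming that the $D_\pi = 0$ case is governed exactly by $\beta_2^u$ and $\beta_3^u$. Everything else is a short monotonicity argument on integers, and in fact the argument is robust even if $v'\mdash t$ denotes an arbitrary (not maximal) feasible path from $v'$, since then $\delta \le \beta_1^{v'} < \beta_1^v$ still gives $D_v > 0$. I would also want to double-check the convention for whether the item at $u$ (the last item of $P$) is counted in $\Delta$ or in $\beta_1^u$, to avoid an off-by-one in the combined count; this affects only the definition of $\Delta$ and not the logic of the dominance argument.
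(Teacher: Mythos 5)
Your proposal is correct and follows essentially the same route as the paper: both reduce the question to the signed count $\beta_1^p + \beta_1^{(\cdot)}$ and use the strict inequality $\beta_1^v > \beta_1^{v'}$ to conclude. Your write-up additionally supplies the feasibility characterization in terms of the sign of the combined count (and the observation that strictness lets you skip the $D=0$ boundary case), which the paper's one-line proof leaves implicit.
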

\begin{proof}
Let $\beta_1^p$ denote the difference of the number of values $\alpha \in P: \alpha \ge c$ to the number of values $\alpha \in P: \alpha < c$. Then, $\beta_1^p + \beta_1^v > \beta_1^p + \beta^{v'}_1$, meaning there is a greater number of values $\alpha \ge c$ on path $v\mdash t$, compared to path $v' \mdash t$.
\end{proof}
All other conditions require $\beta_1^v = \beta_1^{v'}$. For these conditions, we first calculate $med_{v'} = \frac{\beta_2^{v'} + \beta_3^{v'}}{2}, med_v = \frac{\beta_2^v + \min\left\lbrace \beta_3^v, \alpha^v\right\rbrace}{2}$. Conditions two to four are proved in \ref{lem:med2}.
\begin{lem}\label{lem:med2}
Given $\beta_1^v = \beta_1^{v'}$, any feasible extension of an infeasible pattern $P$ by path $v' \mdash t$ is also feasible for path $v\mdash t$, if one of the following three conditions hold: 1. $med_v \ge c, med_{v'} < c$, 2. $med_v \ge c, med_{v'} \ge c, \beta_2^v > \beta_2^{v'}$, 3. $med_v < c, med_{v'} < c, \beta_3^v > \beta_3^{v'}$.
\end{lem}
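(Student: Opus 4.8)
The plan is to reduce median feasibility to two scalar quantities and then match the three conditions with a short case analysis. First I would recall the elementary characterization behind the discussion preceding the lemma: for any multiset of attribute values, $C_{med}(\alpha)\ge c$ holds exactly when the number of values $\ge c$ strictly exceeds the number $<c$, or the two counts are equal and the average of the largest value $<c$ and the smallest value $\ge c$ is at least $c$. Applied to the combined multiset of $P$ together with a continuation, and writing $\beta_1^P$ for the count difference (number of $\alpha\ge c$ minus number of $\alpha<c$) inside $P$, the combined count difference equals $\beta_1^P+\beta_1^v$ for the $v$-extension and $\beta_1^P+\beta_1^{v'}$ for the $v'$-extension. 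The hypothesis $\beta_1^v=\beta_1^{v'}$ makes these identical, which is the lever for the whole argument.

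Next I would reduce to the balanced case. If the common combined difference $\beta_1^P+\beta_1^v$ is nonzero, feasibility is decided by its sign alone and is therefore the same for both extensions, so the implication is immediate. The only case that needs work is $\beta_1^P+\beta_1^v=0$. There, feasibility of the $v$-extension is equivalent to $\max\{a,\beta_2^v\}+\min\{b,\beta_3^{v,\mathrm{eff}}\}\ge 2c$, where $a$ is the largest value $<c$ in $P$, $b$ is the smallest value $\ge c$ in $P$, and $\beta_3^{v,\mathrm{eff}}:=\min\{\beta_3^v,\alpha^v\}$ folds in node $v$'s own attribute, so that $med_v=(\beta_2^v+\beta_3^{v,\mathrm{eff}})/2$. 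Abbreviating $f_w(a,b):=\max\{a,\beta_2^w\}+\min\{b,\beta_3^{w,\mathrm{eff}}\}$, the target becomes: $f_{v'}(a,b)\ge 2c$ implies $f_v(a,b)\ge 2c$ for every admissible pair with $a<c\le b$.

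The key observation that unlocks conditions 1 and 3 (those with $med_{v'}<c$) is that $f_{v'}(a,b)\ge 2c$ then forces $a>\beta_2^{v'}$; otherwise $\max\{a,\beta_2^{v'}\}=\beta_2^{v'}$ and $f_{v'}\le\beta_2^{v'}+\beta_3^{v'}=2\,med_{v'}<2c$. Hence in these cases $P$ itself supplies the binding low value, which equalizes the $\max$-terms of $f_v$ and $f_{v'}$. Each condition then closes by splitting on whether $b\le\beta_3^{v,\mathrm{eff}}$: in the first subcase $\min\{b,\beta_3^{v,\mathrm{eff}}\}=b\ge\min\{b,\beta_3^{v'}\}$, so I bound $f_v$ from below by $f_{v'}$ using the relevant $\beta_2$-relation ($a>\beta_2^{v'}$ in condition 1, $\beta_2^v>\beta_2^{v'}$ in condition 2); in the second subcase $\min\{b,\beta_3^{v,\mathrm{eff}}\}=\beta_3^{v,\mathrm{eff}}$ and I invoke $med_v\ge c$, i.e. $\beta_2^v+\beta_3^{v,\mathrm{eff}}\ge 2c$, for conditions 1--2. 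Condition 3, where $med_v<c$ rules out that last step, instead splits $b$ against $\beta_3^{v'}$ and $\beta_3^{v,\mathrm{eff}}$ and uses the hypothesis $\beta_3^v>\beta_3^{v'}$ to show the $\min$-term of $f_v$ dominates that of $f_{v'}$.

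The hard part will not be any single inequality but keeping the bookkeeping honest. Domination must hold uniformly over all prefixes $P$, i.e. over all admissible $(a,b)$, so the three conditions should be read as a partition of parameter space in which either the $P$-values or the stored path-values are binding; getting this quantifier order right is the real content. I would also take care with the $\min\{\beta_3^v,\alpha^v\}$ correction, which is what makes node $v$'s attribute count in the candidate path, and would note that restricting to $\beta_1^v=\beta_1^{v'}$ costs nothing because Lemma \ref{lem:med1} already shows a strictly larger $\beta_1$ dominates outright, so the remaining comparison is exactly the tie-break handled here.
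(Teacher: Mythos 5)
Your proposal follows the same basic route as the paper's proof: reduce feasibility of the combined multiset to the sign of the count difference $\beta_1^p+\beta_1^{v}=\beta_1^p+\beta_1^{v'}$, dispose of the nonzero case immediately (it transfers because the $\beta_1$'s are equal), and in the balanced case compare the quantities $\max\{\beta_2^p,\beta_2^{w}\}+\min\{\beta_3^p,\beta_3^{w}\}$ for $w\in\{v,v'\}$. Where you differ is in the care taken, and the difference is substantive. For condition 1 the paper simply asserts that $med_{v'}<c$ together with infeasibility of $P$ forces any feasible $v'\mdash t$ extension to have strictly positive count difference, so that the balanced case never needs to be examined; that assertion is false. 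Take $c=10$, $P$ with values $\{3,9,12\}$ (so $\beta_1^p=-1$, $\beta_2^p=9$, $\beta_3^p=12$, $P$ infeasible) and a $v'\mdash t$ path with values $\{5,11,14\}$ (so $\beta_1^{v'}=1$, $\beta_2^{v'}=5$, $\beta_3^{v'}=11$, $med_{v'}=8<c$): the combined count difference is $0$ yet the combined median is $(9+11)/2=10\ge c$. Your observation that $f_{v'}(a,b)\ge 2c$ together with $med_{v'}<c$ forces $a>\beta_2^{v'}$, followed by the split on $b$ against $\beta_3^{v,\mathrm{eff}}$ (using $a+b\ge f_{v'}$ in one branch and $2\,med_v\ge 2c$ in the other), is exactly what is needed to close that hole. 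Your condition-2 argument coincides with the paper's: the only undecided combination is $\beta_2^{v'}+\beta_3^p$, and $\beta_2^v>\beta_2^{v'}$ transfers it.

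One caveat concerns condition 3, whose proof the paper omits entirely. The hypothesis $\beta_3^v>\beta_3^{v'}$ does not by itself yield $\min\{\beta_3^v,\alpha^v\}\ge\beta_3^{v'}$, so the domination of the $\min$-terms you invoke can fail when $\alpha^v<\beta_3^{v'}$, and $med_v<c$ removes the $2\,med_v\ge 2c$ fallback available in conditions 1 and 2. This is arguably a defect of the asymmetric definitions of $med_v$ and $med_{v'}$ in the statement rather than of your plan, but if you write condition 3 out in full you will need either to justify why the $\alpha^v$ correction is harmless there, or to strengthen the hypothesis to $\min\{\beta_3^v,\alpha^v\}>\beta_3^{v'}$. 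Apart from this, your proof is a correct and more complete version of the paper's argument.
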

\begin{proof}
Let $\beta^p_1$ to $\beta^p_3$ be defined as before. For condition 1, as $med_{v'} < c$ and $P$ is infeasible, any extension of $P$ by $u\mdash t$ must have $\beta_1^p + \beta_1^{v'} > 0$, which is also satisfied by path $v \mdash t$. For condition 2, if an infeasible pattern $P$ can be extended to a feasible pattern by $v' \mdash t$, then either $\beta_1^p + \beta_1^{v'} > 0$ which implies feasibility of $v \mdash t$, or $\beta_1^p + \beta_1^{v'} = 0$. In this case, the only value of $\frac{\max\left\lbrace \beta_2^p, \beta_2^{v'}\right\rbrace + \min\left\lbrace \beta_3^p, \beta_3^{v'}\right\rbrace}{2}$ (i.e., the median of pattern $P$ extended by $v'\mdash t$), which is not guaranteed to be feasible or infeaisble is $\frac{ \beta_2^{v'} + \beta_3^p}{2}$. However, if $\frac{ \beta_2^{v'} + \beta_3^p}{2} \ge c$, we also have $\frac{ \beta_2^{v} + \beta_3^p}{2} \ge c$. The proof of the third rule is similar to the second rule, and ommited due to space limits.
\end{proof}

If any of the above rules are satisfied, we update $\beta_1^u \leftarrow \beta_1^v + 1, \beta_2^u \leftarrow \max\left\lbrace \beta_2^v, \alpha \right\rbrace, \beta_3^u \leftarrow \beta_3^v$ if $\alpha^u \ge c$, or $\beta_1^u \leftarrow \beta_1^v - 1, \beta_2^u \leftarrow \beta_2^v, \beta_3^u \leftarrow \max\left\lbrace \beta_3^u, \beta_3^v \right\rbrace$ otherwise. Proposition \ref{thm:med} shows that these values are sufficient to determine whether an infeasible pattern $P$ can be extended to a feasible one.

\begin{proposition}\label{thm:med}
Let $\beta^p_1-\beta^p_3$ be defined as before. There exists a feasible extension from node $u$ with respect to $C_{med}(\alpha) \ge c$ if and only if $\beta_1^u + \beta_1^p > 0$, or $\beta_1^u + \beta_1^p = 0, \frac{\min\left\lbrace \beta_3^p, \beta_3^p\right\rbrace + \max\left\lbrace \beta_2^p, \beta_2^u\right\rbrace}{2} \ge c$. 
\end{proposition}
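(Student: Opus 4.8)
The plan is to first reduce the claim, via the two dominance lemmas, to a single ``best'' extension path, and then to translate the median-satisfaction test into the stored counts and boundary values. I would begin by recalling the elementary characterization of the median of the multiset formed by the attribute values of $P$ together with those along a fixed $u\mdash t$ continuation: writing $k$ for the number of values $\ge c$ and $l$ for the number $< c$ on that combined multiset, the median is $\ge c$ precisely when $k > l$, it is $< c$ when $k < l$, and when $k = l$ the two central order statistics are exactly the largest value below $c$ and the smallest value at or above $c$, so that the median equals their average. For a continuation through child $v$ this gives $k - l = \beta_1^p + \beta_1^v$, and for the dominating path recorded at $u$ it gives $k-l=\beta_1^p+\beta_1^u$; this identifies the sign test appearing in the statement and isolates the quantity whose sign must be examined.

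Next I would invoke Lemma~\ref{lem:med1} and Lemma~\ref{lem:med2}, together with the induction underlying the update rule (as in Lemma~\ref{lem:avr}), to argue that the stored triple $\beta_1^u,\beta_2^u,\beta_3^u$ records precisely the $u\mdash t$ path that dominates every other continuation with respect to the possibility of completing $P$ to a median-feasible pattern. The two lemmas are exactly what is needed here: Lemma~\ref{lem:med1} handles the generic case where one path has a strictly larger count surplus $\beta_1$, while Lemma~\ref{lem:med2} covers the delicate ties $\beta_1^v = \beta_1^{v'}$ by comparing the induced medians and the boundary values $\beta_2,\beta_3$. Consequently it suffices to verify feasibility for this single dominating path, whose relevant statistics are encoded in the $\beta^u$ values.

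With the reduction in place, both directions follow by matching the characterization to the stored quantities. For sufficiency, if $\beta_1^u + \beta_1^p > 0$ the count surplus is positive and the median of $P$ extended along the dominating path is $\ge c$; if $\beta_1^u + \beta_1^p = 0$, the overall largest value below $c$ is $\max\{\beta_2^p,\beta_2^u\}$ and the overall smallest value at or above $c$ is $\min\{\beta_3^p,\beta_3^u\}$, so the median equals their average and the displayed inequality is exactly the feasibility test. For necessity, if neither condition holds then either $\beta_1^u + \beta_1^p < 0$, in which case even the maximum-surplus path (which is what $\beta_1^u$ records) has too few values above $c$, or the tie $\beta_1^u+\beta_1^p=0$ holds but the averaged boundary values fall below $c$; in both cases the dominating path fails, and hence, by Lemmas~\ref{lem:med1} and \ref{lem:med2}, so does every continuation.

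The step I expect to be the main obstacle is the tie case $\beta_1^u + \beta_1^p = 0$, where feasibility is not decided by counts alone but by the two central order statistics. The care required is to confirm that the boundary values tracked on the dominating path, namely $\beta_2^u$ (largest value below $c$) and $\beta_3^u$ (smallest value at or above $c$), are genuinely the ones that become central after merging with $P$, and that combining them with $\beta_2^p$ via $\max$ and with $\beta_3^p$ via $\min$ recovers the correct central pair of the merged multiset. This is precisely the point that Lemma~\ref{lem:med2} is designed to secure, so the work here is mostly a matter of citing that lemma correctly rather than reproving the tie analysis from scratch.
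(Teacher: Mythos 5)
Your proposal is correct and follows essentially the same route as the paper: reduce to the single dominating $u\mdash t$ path recorded by $(\beta_1^u,\beta_2^u,\beta_3^u)$ via Lemmas~\ref{lem:med1} and~\ref{lem:med2}, then decide feasibility on that path by the sign of the count surplus $\beta_1^p+\beta_1^u$ and, in the tied case, by averaging the merged boundary values $\max\{\beta_2^p,\beta_2^u\}$ and $\min\{\beta_3^p,\beta_3^u\}$ (you also correctly read the statement's $\min\{\beta_3^p,\beta_3^p\}$ as the intended $\min\{\beta_3^p,\beta_3^u\}$). Your write-up is more explicit than the paper's terse proof, particularly in spelling out the order-statistic characterization of the median, but the underlying argument is the same.
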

\begin{proof}
The necessity is straightforward. For the converse, first assume $\beta_1^p + \beta_1^v < 0$, then by Lemma \ref{lem:med2}, no $u\mdash t$ path contains enough values $\alpha \ge c$ to satisfy $C_{med}(\alpha) \ge c$. For the second condition, if $\beta_1^u + \beta_1^p = 0, \frac{\min\left\lbrace \beta_3^p, \beta_3^u\right\rbrace + \max\left\lbrace \beta_2^p, \beta_2^u\right\rbrace}{2} < c$, then by Lemma \ref{lem:med2}, the maximum median between all $u\mdash t$ paths is below threshold $c$.
\end{proof}

\section{Mining the MDD database with prefix-projection}\label{sec:MDDmining}

We now present our \textit{MDD prefix-projection} (MPP) algorithm, which performs prefix-projection on the MDD database. The first step of the algorithm is to find all frequent items $i$, i.e. patterns of size one, using a depth-first-search.  This is automatically done during the construction of the MDD database, and modeled by the children of root node $r$.  In the next steps, the algorithm attempts to expand a frequent pattern generated in previous iterations. Using the stored information in the MDD, we prune extensions that cannot lead to a feasible pattern. In particular, for an infeasible pattern $P$ ending at node $u \in U$, the algorithm uses the information stored at $u$ to determine whether $P$ may be extended to a feasible pattern. If a feasible extension does not exist, the search is pruned. Otherwise, pattern $P$ is extended and investigated in future iterations. 


In contrast to searching the database rows in prefix-projection, the MPP algorithm follows feasible paths in the MDD database. This leads to a more efficient search, as some infeasible extensions have been removed when constructing the MDD database. The trade-off is that finding paths corresponding to a sequence $S$ requires a search on labels $d_u, a_u$, thereby incurring additional computational cost. For efficient memory utilization, the MDD is not physically projected, but rather \textit{pseudo projected} \cite{han2001prefixspan}. In pseudo projection, only the initial $\mathcal{SD}$ is stored in memory, and search is initiated from ``projection pointers'' pointing to the MDD nodes.

In prefix-projection, all $N$ sequences are searched in each iteration, and an item $i \in I$ is frequent if its final count is at least $\theta$. As opposed to searching all $N$ sequences, we propose to stop when it is guaranteed that an item $i$ is not frequent. Let $n$ denote the number of sequences searched so far when searching for $i$, and let $Sup(P)$ denote the number of sequences that contain pattern $P$.  We use the following proposition to detect that item $i$ cannot be frequent, given a frequent pattern $P$:

\begin{proposition}\label{thm:theta}
If $n - Sup(i) > Sup(P) - \theta$, item $i$ cannot be frequent in the projected database.
\end{proposition}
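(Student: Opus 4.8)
The plan is to treat this as a simple pruning bound obtained by counting the best possible remaining support that item $i$ can still accumulate. The key observation is that, in the projected database obtained from prefix $P$, the only sequences in which an extension $P\cdot i$ can possibly occur are those that already contain $P$, of which there are exactly $Sup(P)$. Hence the candidate pool for the extension has size $Sup(P)$, and after $n$ of these have been searched, exactly $Sup(P)-n$ remain unexamined.

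First I would record the current state of the count: among the $n$ sequences searched so far, $Sup(i)$ of them have been found to contain the extension by $i$. Since each of the remaining $Sup(P)-n$ sequences can contribute at most one to the support of $i$, the largest final support that $i$ could attain is $Sup(i)+\left(Sup(P)-n\right)$. This is the best-case bound, attained only if every not-yet-searched sequence contains the extension.

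Next I would connect this bound to the threshold. An item is frequent precisely when its final support is at least $\theta$, so $i$ cannot be frequent whenever its best-case final support falls strictly below $\theta$, i.e.\ whenever $Sup(i)+\left(Sup(P)-n\right)<\theta$. A routine rearrangement shows this is exactly the hypothesis $n-Sup(i)>Sup(P)-\theta$. Since even the most optimistic completion of the remaining search leaves $i$ below the threshold, $i$ cannot be frequent, which establishes the claim.

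The argument is elementary, so I do not anticipate a genuine obstacle; the only point requiring care is the identification of the candidate pool. One must argue that the relevant universe is the $Sup(P)$ sequences containing $P$ rather than all $N$ sequences of $\mathcal{SD}$---this follows because any sequence supporting $P\cdot i$ necessarily supports $P$---and that $n$ counts progress through this projected pool. Getting these counts right is precisely what makes the inequality come out as stated, with $Sup(P)$ rather than $N$ on the right-hand side.
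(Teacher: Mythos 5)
Your proof is correct and is essentially the paper's argument in rearranged form: the paper compares the number of searched sequences missing $i$ (namely $n-Sup(i)$) to the maximum allowable number of misses $Sup(P)-\theta$, while you equivalently bound the best-case final support by $Sup(i)+(Sup(P)-n)$ and compare it to $\theta$. Both hinge on the same observation that the candidate pool is the $Sup(P)$ sequences containing $P$, so there is nothing genuinely different here.
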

\begin{proof}
The left-hand-side is the number of searched sequences that do not contain $i$, and the right-hand-side is the maximum number of sequences that do not contain $i$ while it remains frequent. 
\end{proof}

Projecting the minimal prefix containing a pattern $P$ (as done in SPM) is not sufficient for CSPM \cite{aoga2017mining}. Extensions from the minimal prefix may violate a constraint, while it may be the case that another larger prefix of the sequence satisfies such extensions. For example, the minimal prefix containing item $C$ in sequence 3 of Table \ref{tab:data} cannot be extended by item $A$ under a constraint $C_{gap}(time)\le 3$. However, extending the larger prefix containing $C$ is feasible. We are thus required to store all prefixes and their extension at each iteration of MPP. 

A time-consuming task of the general prefix-projection algorithm is to determine whether a specific item $i$ exists in sequences of the projected database. To avoid searching the entire sequence for every item, \cite{aoga2017mining} store the last position of items $i \in I$ for sequences $S \in \mathcal{SD}$. An MDD database enables search for the extension of all items $i \in I$ simultaneously, resulting in more efficient search. That is, as opposed to searching for a specific item $i$, all children of node $u$ are searched, and record the items which enable a feasible extension.


\section{Numerical results}\label{sec:results}

\begin{table}[t]
\centering
\caption{Five real-life datasets and their features.}
\label{tab:inst}
\resizebox{\columnwidth}{!}{
	\begin{tabular}{l l l l l}\toprule
	$\mathcal{SD}$& $N$ & $|I|$ & $M $& $avg(|S|)$*\\\midrule
	Kosarak&837,206&41,001&2,498&9.3\\
	MSNBC&989,818&19&29,591&10.5\\
	Kosarak (small)&59,261&20,894&796&9.2\\
	BMSWebView1&26,667&497&267&4.4\\
	BMSWebView2&52,619&3,335&161&6.3\\\bottomrule
	\multicolumn{5}{l}{*Average length of sequences}
	\end{tabular}}
\end{table}

For our numerical tests, we use real-life click-stream benchmark databases\footnote{http://www.philippe-fournier-viger.com/spmf/index.php?link=datasets.php}, listed in Table~\ref{tab:inst}.  We note that two of these databases, Kosarak and MSNBC, are considerably larger than those typically reported in the CSPM literature, with about 900,000 sequences of length up to 29,500, and containing up to 40,000 items. None of these standard benchmark datasets are annotated with attributes.  To be able to evaluate our approach, we therefore generate three attributes of time, price, and quality, as follows. For the time attribute, we randomly generate a number between 0 and 600 seconds, to represent the time spent by users at each click. With a probability of 5\%, we model the user leaving the session by setting the time between clicks to a value between 1 to 10 hours. For the price and quality attributes, we generate a number between 1 and 100 for each item $i \in S, \forall S \in \mathcal{SD}$.

All algorithms are coded in C++, with the exception of PPICt which is coded in Scala.\footnote{We thank the developers of PPICt for sharing their code.}  All experiments are executed on the same PC with an Intel Xeon 2.33 GHz processor, 24GB of memory, using Ubuntu 12.04.5 as operating system.  We limit all tests to use one core of the CPU.  The MPP code is available and open source.\footnote{https://github.com/aminhn/MPP}

\subsection{Comparison with prefix-projection and constraint checks}

Our first goal is to evaluate the impact of the MDD database and the associated constraint reasoning, especially in presence of more complex constraints.  However, no other CSPM system accommodates constraints such as average and median and multiple item attributes.  Because simple generate-and-test (via post-processing) does not scale due to the size of the databases, we developed a prefix-projection algorithm for the original database, that can handle multiple item attributes and effectively prune the search space for anti-monotone constraints such as gap and maximum span.  
We name this algorithm Prefix-Projection with Constraint Checks (PPCC). PPCC operates by prefix-projection and extends a pattern $P$ if it satisfies all anti-monotone constraints, and prunes the extension otherwise. For non-monotone constraints, PPCC extends infeasible patterns with the hope that a feasible super-pattern exists, and performs a constraint check at the end.

In Figure~\ref{fig:ppcc} we compare the performance of MPP and PPCC in terms of total CPU time (MDD construction plus mining algorithm), given minimum support (Min supp) as a percentage of the total number of sequences.  The experiment uses three scenarios with constraints on one, two, and three attributes, respectively:
\begin{small}
\[
\begin{array}{l}
\textrm{time:~} 30 \leq C_{gap}(time) \leq 900,~ 900 \le C_{spn}(time) \le 3600, \\
\textrm{price:~} 30 \le C_{avg}(price) \le 70,~  40 \le C_{med}(price) \le 60, \\
\textrm{quality:~} 40 \le C_{avg}(quality) \le 60,~ 30 \le C_{med}(quality) \le 70.
\end{array}
\]
\end{small}

Scenario one (PPCC1 and MPP1) only considers the time constraints.  Scenario two (PPCC2 and MPP2)  considers the time and price constraints.  Scenario three (PPCC3 and MPP3) considers all time, price, and quality constraints.
The results in Figure~\ref{fig:ppcc} show that mining more constrained patterns takes more time for both methods.  However, MPP is always more efficient than PPCC, and often considerably.  For example, finding all frequent patterns with minimum support of 4\% with all constraints (scenario three) in the MSNBC database takes PPCC about 4,000s while MPP only needs about 2,000s. Moreover, Table \ref{tab:mddtime} shows that the time required to construct the MDD database and generate constraint specific information is quite small. This indicates that our MDD database can be used to effectively and efficiently handle constraints such as average and median.

\begin{figure}	
\centering
\includegraphics{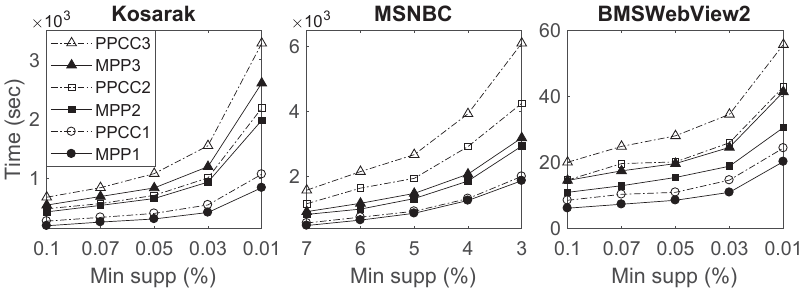}
\caption{Mining with constraints $30 \le C_{gap}(time) \le 900, 900 \le C_{spn}(time) \le 3600, 30 \le C_{avg}(price) \le 70, 40 \le C_{med}(price) \le 60, 40 \le C_{avg}({\it quality}) \le 60, 30 \le C_{med}({\it quality}) \le 70$. Attributes and their corresponding constraints are added incrementally from 1 to 3.}
\label{fig:ppcc}
\end{figure}

\subsection{Comparison with PPICt}
We next compare our approach to the state-of-the-art CSPM algorithm PPICt, which is implemented in the CP framework OscaR\footnote{https://bitbucket.org/oscarlib/oscar/wiki/Home}~\cite{aoga2017mining}.  PPICt accommodates a wide range of constraints, including gap and maximum span constraints, but is restricted to a single attribute.  We therefore evaluate MPP and PPICt for mining patterns with the following gap and maximum span constraints over the time attribute:
\begin{small}
\[
30 \leq C_{gap}(time) \leq 90,~~900 \le C_{spn}(time) \le 3600.
\]
\end{small}
\noindent Initial tests indicated that the PPICt code is unstable when executed on the full databases Kosarak and MSNBC.  We therefore executed the codes on the smaller benchmark variant of Kosarak (which is also used in \cite{aoga2017mining}), BMSWebView1, and BMSWebView2.  The results are presented in Figure~\ref{fig:ppict}, which follows the same format as Figure~\ref{fig:ppcc}.

\begin{figure}
\centering
\includegraphics{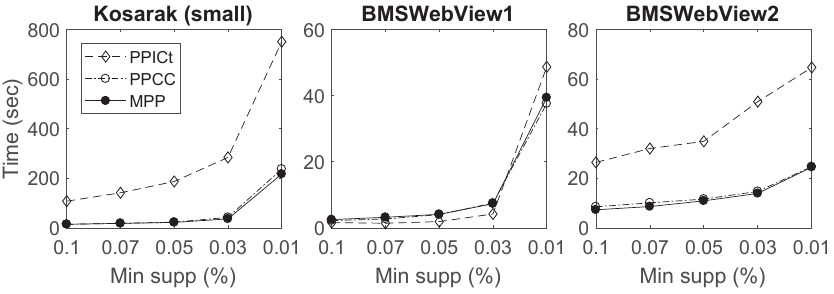}
\caption{Mining with one item attribute (time) and constraints $30 \le C_{gap}(time) \le 900, C_{spn}(time) \le 3600$.}
\label{fig:ppict}
\end{figure}  

A first observation is that MPP and PPCC produce almost identical results, as they both benefit from the same pruning rules for anti-monotone constraints. The time required to build the MDD database, shown in Table \ref{tab:mddtime}, is made up by a faster prefix-projection algorithm due to implementing the gap constraints on the MDD itself.  Both MPP and PPCC also outperform PPICt on Kosarak (small) and BMSWebview2, but all three methods perform similarly on BMSWebView1.  However, PPICt uses significantly more memory, up to 14Gb, while MPP uses up to 1Gb, and PPCC consumes the lowest with at most 0.5Gb.  We conclude that on this benchmark our approach is competitive with or more efficient than PPICt.

\begin{table} 
\centering
\caption{Time (in seconds) required for MDD construction and information generation.}
\label{tab:mddtime}
\resizebox{\columnwidth}{!}{
	\begin{tabular}{c c c c c c}\toprule
	Algorithm & Kosarak & MSNBC & BMS2 & BMS1 & Kosarak(small)\\ \midrule
	MPP1& 47 & 45 & 2 & - & -\\
	MPP2& 106 & 103 & 4 & - & -\\
	MPP3& 151 & 158 & 5 & - & -\\
	MPP& - & - & 2 & 0.5 & 4\\\bottomrule
	\end{tabular}}
\end{table}

\section{Conclusion}\label{sec:concl}

In this paper, we developed a novel MDD representation for CSPM. We prove how constraint satisfaction is achieved for a number of constraints, including sum, average, and median, by storing constraint-specific information at the MDD nodes. Moreover, our approach is able to accommodate several item attributes with constraints, which occur frequently in real-world problems. 

We embedded our MDD representation within a prefix-projection algorithm, called MPP, and performed an experimental evaluation on real-life benchmark databases with up to 980,000 sequences and 40,000 items.  The results showed that the MPP mining algorithm is always more efficient than a prefix-projection algorithm with constraint checks. The benefits of MPP become larger as we increase the size of the database, the number of constraints, or the number of attributes. Although MPP is primarily designed for efficient constraint satisfaction of rich constraints and multiple item attributes, it remains competitive with a CP-based state-of-the-art CSPM algorithm, for databases with only one item attribute and anti-monotone constraints.

\bibliographystyle{aaai}
\bibliography{PSMlib}

\begin{thebibliography}{}

\bibitem[\protect\citeauthoryear{Agrawal, Srikant, and
  others}{1994}]{agrawal1994fast}
Agrawal, R.; Srikant, R.; et~al.
\newblock 1994.
\newblock Fast algorithms for mining association rules.
\newblock In {\em Proc. 20th int. conf. very large data bases, VLDB}, volume
  1215,  487--499.

\bibitem[\protect\citeauthoryear{Aoga, Guns, and Schaus}{2017}]{aoga2017mining}
Aoga, J. O.~R.; Guns, T.; and Schaus, P.
\newblock 2017.
\newblock Mining time-constrained sequential patterns with constraint
  programming.
\newblock {\em Constraints}  1--23.

\bibitem[\protect\citeauthoryear{Bergman \bgroup et al\mbox.\egroup
  }{2016}]{BerCirHoeHoo2016}
Bergman, D.; Cire, A.~A.; van Hoeve, W.-J.; and Hooker, J.~N.
\newblock 2016.
\newblock {\em Decision Diagrams for Optimization}.
\newblock Springer.

\bibitem[\protect\citeauthoryear{Bistarelli and
  Bonchi}{2007}]{bistarelli2007soft}
Bistarelli, S., and Bonchi, F.
\newblock 2007.
\newblock Soft constraint based pattern mining.
\newblock {\em Data \& Knowledge Engineering} 62(1):118--137.

\bibitem[\protect\citeauthoryear{Bonchi and Lucchese}{2005}]{bonchi2005pushing}
Bonchi, F., and Lucchese, C.
\newblock 2005.
\newblock Pushing tougher constraints in frequent pattern mining.
\newblock In {\em PAKDD}, volume~5,  114--124.
\newblock Springer.

\bibitem[\protect\citeauthoryear{Bonchi and
  Lucchese}{2007}]{bonchi2007extending}
Bonchi, F., and Lucchese, C.
\newblock 2007.
\newblock Extending the state-of-the-art of constraint-based pattern discovery.
\newblock {\em Data \& Knowledge Engineering} 60(2):377--399.

\bibitem[\protect\citeauthoryear{Borah and Nath}{2018}]{borah2018fp}
Borah, A., and Nath, B.
\newblock 2018.
\newblock Fp-tree and its variants: Towards solving the pattern mining
  challenges.
\newblock In {\em Proceedings of First International Conference on Smart
  System, Innovations and Computing},  535--543.
\newblock Springer.

\bibitem[\protect\citeauthoryear{Cambazard, Hadzic, and
  O'Sullivan}{2010}]{cambazard2010knowledge}
Cambazard, H.; Hadzic, T.; and O'Sullivan, B.
\newblock 2010.
\newblock Knowledge compilation for itemset mining.
\newblock In {\em ECAI}, volume~10,  1109--1110.

\bibitem[\protect\citeauthoryear{Chen and Hu}{2006}]{chen2006constraint}
Chen, Y.-L., and Hu, Y.-H.
\newblock 2006.
\newblock Constraint-based sequential pattern mining: The consideration of
  recency and compactness.
\newblock {\em Decision Support Systems} 42(2):1203--1215.

\bibitem[\protect\citeauthoryear{Fournier-Viger \bgroup et al\mbox.\egroup
  }{2017}]{fournier2017survey}
Fournier-Viger, P.; Lin, J. C.-W.; Kiran, R.~U.; Koh, Y.~S.; and Thomas, R.
\newblock 2017.
\newblock A survey of sequential pattern mining.
\newblock {\em Data Science and Pattern Recognition} 1(1):54--77.

\bibitem[\protect\citeauthoryear{Garofalakis, Rastogi, and
  Shim}{1999}]{garofalakis1999spirit}
Garofalakis, M.~N.; Rastogi, R.; and Shim, K.
\newblock 1999.
\newblock Spirit: Sequential pattern mining with regular expression
  constraints.
\newblock In {\em VLDB}, volume~99,  7--10.

\bibitem[\protect\citeauthoryear{Guns \bgroup et al\mbox.\egroup
  }{2017}]{guns2017miningzinc}
Guns, T.; Dries, A.; Nijssen, S.; Tack, G.; and De~Raedt, L.
\newblock 2017.
\newblock Miningzinc: A declarative framework for constraint-based mining.
\newblock {\em Artificial Intelligence} 244:6--29.

\bibitem[\protect\citeauthoryear{Han \bgroup et al\mbox.\egroup
  }{2001}]{han2001prefixspan}
Han, J.; Pei, J.; Mortazavi-Asl, B.; Pinto, H.; Chen, Q.; Dayal, U.; and Hsu,
  M.
\newblock 2001.
\newblock Prefixspan: Mining sequential patterns efficiently by
  prefix-projected pattern growth.
\newblock In {\em proceedings of the 17th international conference on data
  engineering},  215--224.

\bibitem[\protect\citeauthoryear{Han \bgroup et al\mbox.\egroup
  }{2004}]{han2004mining}
Han, J.; Pei, J.; Yin, Y.; and Mao, R.
\newblock 2004.
\newblock Mining frequent patterns without candidate generation: A
  frequent-pattern tree approach.
\newblock {\em Data mining and knowledge discovery} 8(1):53--87.

\bibitem[\protect\citeauthoryear{Kemmar \bgroup et al\mbox.\egroup
  }{2016}]{kemmar2016global}
Kemmar, A.; Loudni, S.; Lebbah, Y.; Boizumault, P.; and Charnois, T.
\newblock 2016.
\newblock A global constraint for mining sequential patterns with gap
  constraint.
\newblock In {\em International Conference on AI and OR Techniques in
  Constriant Programming for Combinatorial Optimization Problems},  198--215.
\newblock Springer.

\bibitem[\protect\citeauthoryear{Kemmar \bgroup et al\mbox.\egroup
  }{2017}]{kemmar2017prefix}
Kemmar, A.; Lebbah, Y.; Loudni, S.; Boizumault, P.; and Charnois, T.
\newblock 2017.
\newblock Prefix-projection global constraint and top-k approach for sequential
  pattern mining.
\newblock {\em Constraints} 22(2):265--306.

\bibitem[\protect\citeauthoryear{Le~Bras, Lenca, and
  Lallich}{2009}]{le2009optimal}
Le~Bras, Y.; Lenca, P.; and Lallich, S.
\newblock 2009.
\newblock On optimal rule mining: A framework and a necessary and sufficient
  condition of antimonotonicity.
\newblock In {\em Pacific-Asia Conference on Knowledge Discovery and Data
  Mining},  705--712.
\newblock Springer.

\bibitem[\protect\citeauthoryear{Leung \bgroup et al\mbox.\egroup
  }{2012}]{leung2012constrained}
Leung, C. K.-S.; Jiang, F.; Sun, L.; and Wang, Y.
\newblock 2012.
\newblock A constrained frequent pattern mining system for handling aggregate
  constraints.
\newblock In {\em Proceedings of the 16th International Database Engineering \&
  Applications Sysmposium},  14--23.
\newblock ACM.

\bibitem[\protect\citeauthoryear{Lin and Lee}{2005}]{lin2005efficient}
Lin, M.-Y., and Lee, S.-Y.
\newblock 2005.
\newblock Efficient mining of sequential patterns with time constraints by
  delimited pattern growth.
\newblock {\em Knowledge and Information Systems} 7(4):499--514.

\bibitem[\protect\citeauthoryear{Loekito and Bailey}{2006}]{loekito2006fast}
Loekito, E., and Bailey, J.
\newblock 2006.
\newblock Fast mining of high dimensional expressive contrast patterns using
  zero-suppressed binary decision diagrams.
\newblock In {\em Proceedings of the 12th ACM SIGKDD international conference
  on Knowledge discovery and data mining},  307--316.
\newblock ACM.

\bibitem[\protect\citeauthoryear{Loekito and Bailey}{2007}]{loekito2007zero}
Loekito, E., and Bailey, J.
\newblock 2007.
\newblock Are zero-suppressed binary decision diagrams good for mining frequent
  patterns in high dimensional datasets?
\newblock In {\em Proceedings of the sixth Australasian conference on Data
  mining and analytics-Volume 70},  139--150.
\newblock Australian Computer Society, Inc.

\bibitem[\protect\citeauthoryear{Loekito, Bailey, and
  Pei}{2010}]{loekito2010binary}
Loekito, E.; Bailey, J.; and Pei, J.
\newblock 2010.
\newblock A binary decision diagram based approach for mining frequent
  subsequences.
\newblock {\em Knowledge and Information Systems} 24(2):235--268.

\bibitem[\protect\citeauthoryear{Mallick, Garg, and
  Grover}{2014}]{mallick2014constraint}
Mallick, B.; Garg, D.; and Grover, P.~S.
\newblock 2014.
\newblock Constraint-based sequential pattern mining: a pattern growth
  algorithm incorporating compactness, length and monetary.
\newblock {\em Int. Arab J. Inf. Technol.} 11(1):33--42.

\bibitem[\protect\citeauthoryear{Masseglia, Poncelet, and
  Teisseire}{2009}]{masseglia2009efficient}
Masseglia, F.; Poncelet, P.; and Teisseire, M.
\newblock 2009.
\newblock Efficient mining of sequential patterns with time constraints:
  Reducing the combinations.
\newblock {\em Expert Systems with Applications} 36(2):2677--2690.

\bibitem[\protect\citeauthoryear{Negrevergne and
  Guns}{2015}]{negrevergne2015constraint}
Negrevergne, B., and Guns, T.
\newblock 2015.
\newblock Constraint-based sequence mining using constraint programming.
\newblock In {\em Proceedings of CPAIOR}, volume 9075 of {\em LNCS},  288--305.
\newblock Springer.

\bibitem[\protect\citeauthoryear{Nijssen and
  Zimmermann}{2014}]{nijssen2014constraint}
Nijssen, S., and Zimmermann, A.
\newblock 2014.
\newblock Constraint-based pattern mining.
\newblock In {\em Frequent pattern mining}. Springer.
\newblock  147--163.

\bibitem[\protect\citeauthoryear{Pei, Han, and Wang}{2007}]{pei2007constraint}
Pei, J.; Han, J.; and Wang, W.
\newblock 2007.
\newblock Constraint-based sequential pattern mining: the pattern-growth
  methods.
\newblock {\em Journal of Intelligent Information Systems} 28(2):133--160.

\bibitem[\protect\citeauthoryear{Pyun, Yun, and Ryu}{2014}]{pyun2014efficient}
Pyun, G.; Yun, U.; and Ryu, K.~H.
\newblock 2014.
\newblock Efficient frequent pattern mining based on linear prefix tree.
\newblock {\em Knowledge-Based Systems} 55:125--139.

\bibitem[\protect\citeauthoryear{Soulet and
  Cr{\'e}milleux}{2005}]{soulet2005exploiting}
Soulet, A., and Cr{\'e}milleux, B.
\newblock 2005.
\newblock Exploiting virtual patterns for automatically pruning the search
  space.
\newblock In {\em International Workshop on Knowledge Discovery in Inductive
  Databases},  202--221.
\newblock Springer.

\bibitem[\protect\citeauthoryear{Wegener}{2000}]{wegener2000branching}
Wegener, I.
\newblock 2000.
\newblock {\em Branching programs and binary decision diagrams: theory and
  applications}, volume~4.
\newblock SIAM.

\bibitem[\protect\citeauthoryear{Zaki}{2000}]{zaki2000sequence}
Zaki, M.~J.
\newblock 2000.
\newblock Sequence mining in categorical domains: incorporating constraints.
\newblock In {\em Proceedings of the ninth international conference on
  Information and knowledge management},  422--429.
\newblock ACM.

\end{thebibliography}

\end{document}